\definecolor{amethyst}{rgb}{1, 0, 1}
\definecolor{blue-violet}{rgb}{0.54, 0.17, 0.89}
\definecolor{brightturquoise}{rgb}{0.03, 0.91, 0.87}
\pgfplotsset{compat=1.4}
\pgfplotsset{compat=1.4}
\definecolor{softgray}{rgb}{0.92,0.92,0.95}
\definecolor{softblue}{rgb}{0.90,0.92,1.00}
\definecolor{lightgray}{rgb}{0.12,0.12,0.55}
\definecolor{theframe}{gray}{0.75}
\definecolor{theblue} {rgb}{0.02,0.04,0.48}
\definecolor{thegrey} {gray}{0.5}
\definecolor{theshade}{gray}{0.98}
\definecolor{thered}  {rgb}{0.00,0.00,0.00}
\definecolor{thegreen}{rgb}{0.3,0.3,0.3}
\definecolor{softblue}{rgb}{0.90,0.92,1.00}
\numberwithin{equation}{section}
\newtheorem{lemmer}{Lemma}[section]
\newtheorem{remark}[lemmer]{Remark}
\newtheorem{counterexample}[lemmer]{Counterexample}
\renewcommand{\P}{{\mathbb P}}
\newcommand{\E}{{\mathbb E}}
\newcommand{\1}{{\mathds{1}}}
\newcommand{\Var}{\mbox{Var}}
\title{Kernel absolute summability is \\ only sufficient for RKHS stability}
 \author{%
    Mauro Bisiacco\thanks{Department of Information Engineering, University of Padova, Padova, Italy (bisiacco@dei.unipd.it)}
    \and Gianluigi Pillonetto\thanks{Department of Information Engineering, University of Padova, Padova, Italy (giapi@dei.unipd.it)}
    \hfill\today}
\begin{document}

\maketitle

\begin{abstract}
Regularized approaches have been 
successfully applied to linear system identification in recent years.
Many of them model unknown impulse responses exploiting 
the so called Reproducing Kernel Hilbert spaces (RKHSs) that
enjoy the notable property of being in one-to-one correspondence
with the class of positive semidefinite kernels.
The necessary and sufficient condition for a RKHS to be stable, i.e. to contain only BIBO stable linear dynamic
systems, has been known in the literature at least since 2006.
However, an open question still persists and concerns the equivalence of such condition 
with the absolute summability of the kernel. This paper provides a definite answer to this 
matter by proving that such correspondence does not hold.
A counterexample is introduced that illustrates the existence of stable RKHSs that are 
induced by non-absolutely summable kernels.
\end{abstract}

{\bf{Keywords}}: \small 
BIBO stability; Reproducing Kernel Hilbert spaces; discrete-time impulse responses;
linear and bounded operators; absolute summability 
\section{Introduction}
\normalsize

The classical approach to linear system identification
uses parametric models of different orders.
In particular, a set of candidate structures 
that increase in complexity are selected. 
They are then typically fit to data using
Prediction Error Methods \cite{Ljung,Soderstrom} 
and the `best' model is selected using complexity measure such as Akaike information criterion 
or cross validation techniques \cite{Akaike1974,schwarz1978estimating,Hastie01}.
In the last years, alternative regularized approaches 
have attracted much attention within the control community.
They search for the unknown impulse response in flexible spaces 
that incorporate fundamental dynamic features, like stability, with complexity regulated by some continuous variables. 
In particular, infinite-dimensional spaces known as 
Reproducing Kernel Hilbert Spaces (RKHSs) are widely adopted.\\
RKHSs enjoy many important properties. They are in  
one to one correspondence with the class of positive semidefinite kernels $K$ and have
also fundamental connections with Gaussian processes when $K$ is seen as a covariance \cite{Kimeldorf71Bayes,Lukic,AravkinNN}.
RKHSs were introduced
to the machine learning community in \cite{Girosi:1997}. 
They permit to treat in a unified framework many different algorithms: 
the so called kernel-based methods 
\cite{PoggioMIT,Scholkopf01b} include smoothing splines \cite{Wahba1990}, regularization networks
\cite{Poggio90}, Gaussian regression \cite{Rasmussen} and support vector machines \cite{Drucker97,Vapnik98}.
But while in machine learning kernels are typically used to encode information on function smoothness,
control community's interest has been instead recently addressed to the building of 
RKHSs that include dynamic systems features. 
For instance, the so called stable spline kernel was introduced in \cite{SS2010} to 
model impulse responses that are smooth and decay exponentially to zero. 
It belongs to the much more general class of (BIBO) stable kernels that 
induce RKHSs containing only absolutely summable impulse responses.  
One fundamental question discussed in \cite{SurveyKBsysid}[Part III] was the necessary and sufficient condition 
for kernel stability. 
Nowadays, it is known that $K$ is stable if and only if it induces an integral operator
that maps the whole space of 
essentially bounded functions into the space of absolutely summable functions.
In \cite{Carmeli,DinuzzoSIAM15}, immediately after reporting such result and 
looking for a (in some sense) simpler stability test, 
authors mentioned kernel absolute summability as a sufficient condition.
The necessity was however left as an open problem. And ever since then, 
many papers have cited and used kernel summability as a stability check, 
without providing an answer to this question, e.g. see \cite{Darwish2015,ChenKS2015,Fujimoto2017,ChenKS2018}. 
This paper will face and solve such open question by showing that the equivalence does not hold.
Indeed, it will be proved that there exist stable RKHSs induced by non-absolutely summable
kernels. Our result thus provides a further important step towards a better understanding of RKHSs structures 
useful e.g. for system identification.\\ 
The paper is organized as follows. In Section \ref{Sec2} the problem statement is reported.
Section \ref{Sec3} describes a class of matrices that will be key to solve our problem.
In Section \ref{Sec4}, they are used to prove that absolute kernel summability
is not necessary for the existence of a linear integral operator 
from $\ell_{\infty}$ into $\ell_1$ not subject to  positive semidefinite constraints.
Section \ref{Sec5} then brings   
such constraints into the picture.  
Some properties of symmetric positive semidefinite matrices 
are first given. Next, they are used to reduce our central question 
to a particular problem in finite-dimensional spaces.
Section \ref{Sec6} reports a class of important positive semidefinite matrices
that build upon the matrices illustrated in Section \ref{Sec3}.
They are finally exploited in Section \ref{Sec7}
to prove that kernel absolute summability
is only sufficient for RKHS stability.

\section{Problem statement}\label{Sec2}


A RKHS is a special Hilbert space of functions where 
all the pointwise evaluators are continuous (bounded) linear
functionals. This property also implies that an RKHS
is in one to one correspondence with a 
symmetric and positive semidefinite kernel $K$, i.e. such that 
for any finite natural number $m$,
scalars $c_1,\ldots,c_m$ and elements $x_1,\ldots,x_m$ of the function domain, it holds that 
\begin{equation}\label{Kconstraint}
\sum_{i=1}^{m}\sum_{j=1}^{m}c_ic_j K(x_i,x_j) \geq 0, \quad K(x_i,x_j)=K(x_j,x_i). 
\end{equation}
One can prove that any element of an RKHS is the (possibly infinite) sum
of kernel sections, i.e. of functions of the type $K(x,\cdot)$. This property also suggests 
that vectors inherit the properties of $K$, e.g. continuous kernels define RKHSs of
continuous functions.\\
According to \cite{DinuzzoSIAM15}, kernels are said to be stable if they
induce stable RKHSs, i.e. containing 
only absolutely summable (causal) functions. 
Hence, the elements of such spaces can be 
interpreted as impulse responses of BIBO stable linear and time-invariant dynamic systems.
Without loss of generality, the discrete-time case will be considered.
The function domain is equal to the set of natural numbers $\mathbb{N}$ and
the RKHSs are made up of sequences. So, it is useful to introduce 
the  spaces $\ell_\infty$ and $\ell_1$ of
bounded and absolutely summable sequences of real numbers, respectively, i.e.
$$
\ell_\infty = \Big\{ \{u_i\}_{i \in {\mathbb{N}}}  \ \mbox{s.t.} \   \| u \|_\infty <  \infty \Big\},
$$
and
$$
\ell_1 = \Big\{ \{u_i\}_{i \in {\mathbb{N}}}   \ \mbox{s.t.} \   \| u \|_1 <  \infty \Big\},
$$
where
$$
\| u \|_\infty = \sup_{i \in {\mathbb{N}}}  |u_i| \quad \mbox{and} \quad \| u \|_1 = \sum_{i \in {\mathbb{N}}}  |u_i|. 
$$
Furthermore, it is also useful to see the kernel as an infinite-dimensional matrix
with the $(i,j)$-entries denoted 
by $K_{ij}$. 
Then, the following result states the necessary and sufficient condition
for $K$ to be stable.\\

\begin{theorem}[RKHS stability \cite{Carmeli}]\label{NecSuffStabKer}
Let $\mathcal{H}$ be the RKHS induced by $K: \mathbb{N} \times \mathbb{N} \rightarrow \mathbb{R}$. One then has 
\begin{equation} \label{CondNS}
 \mathcal{H} \subset \ell_1 \  \iff   \  \sum_{i=1}^\infty \left|  \sum_{j=1}^\infty u_j K_{ij} \right| < \infty  \ \ \forall u \in
\ell_{\infty}.
\end{equation}
\end{theorem}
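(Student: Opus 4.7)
Both implications rest on the reproducing identity $f(i)=\langle f,K_i\rangle_{\mathcal{H}}$ (with $K_i:=K(i,\cdot)$), combined with the closed-graph theorem and Riesz representation to promote the set-level statements in \eqref{CondNS} to quantitative operator bounds.

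For the direction ($\Rightarrow$), I would assume $\mathcal{H}\subset\ell_1$. Since convergence in $\mathcal{H}$ forces pointwise convergence (each $K_i$ is a continuous functional) and convergence in $\ell_1$ does too, the inclusion $J:\mathcal{H}\hookrightarrow\ell_1$ has closed graph, which yields $\|f\|_1\leq C\|f\|_{\mathcal{H}}$. Given any $u\in\ell_\infty$, the functional $f\mapsto\sum_i u_i f(i)$ is then bounded on $\mathcal{H}$, so by Riesz it equals $\langle f,v_u\rangle_{\mathcal{H}}$ for a unique $v_u\in\mathcal{H}$. Evaluating at $f=K_i$ and invoking the reproducing property yields $v_u(i)=\sum_j u_j K_{ij}$; since $v_u\in\mathcal{H}\subset\ell_1$, the right-hand side of \eqref{CondNS} is satisfied.

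For the direction ($\Leftarrow$), I would assume the integral operator $L_K:\ell_\infty\to\ell_1$, $(L_Ku)_i:=\sum_j u_j K_{ij}$, is everywhere defined. Substituting $u_j=\operatorname{sgn}(K_{ij})$ (for fixed $i$) into the hypothesis forces each row of $K$ to lie in $\ell_1$, so $L_K$ is well defined componentwise; a closed-graph argument then gives $\|L_K u\|_1\leq M\|u\|_\infty$. Now fix $f\in\mathcal{H}$ and, for each $N$, let $u^{(N)}\in\ell_\infty$ be defined by $u^{(N)}_i=\operatorname{sgn}(f(i))$ for $i\leq N$ and $0$ otherwise. The finite sum $v^{(N)}:=\sum_{i=1}^N u^{(N)}_i K_i\in\mathcal{H}$ satisfies
\[
\|v^{(N)}\|_{\mathcal{H}}^2=\sum_{i,j=1}^N u^{(N)}_i u^{(N)}_j K_{ij}=\sum_{i=1}^\infty u^{(N)}_i (L_K u^{(N)})_i\leq\|u^{(N)}\|_\infty\|L_K u^{(N)}\|_1\leq M,
\]
and therefore
\[
\sum_{i=1}^N|f(i)|=\sum_{i=1}^N u^{(N)}_i\langle f,K_i\rangle_{\mathcal{H}}=\langle f,v^{(N)}\rangle_{\mathcal{H}}\leq\|f\|_{\mathcal{H}}\sqrt{M}.
\]
Letting $N\to\infty$ gives $f\in\ell_1$.

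\emph{Expected difficulty.} The heart of the backward direction is the quadratic identity $\|v^{(N)}\|_{\mathcal{H}}^2=\sum_i u^{(N)}_i(L_K u^{(N)})_i$, which tightly couples the RKHS geometry to the operator norm of $L_K$ and is the only step that uses positivity of $K$ in an essential way. The main technical irritant is justifying the closed-graph application for $L_K$ — in particular, confirming that its domain really is all of $\ell_\infty$ with no hidden convergence pathologies, so that the closed graph theorem may be legitimately invoked to obtain the uniform bound $M$.
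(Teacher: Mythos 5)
The paper does not prove this theorem; it imports it from \cite{Carmeli} (and the boundedness-from-well-definedness step you worry about is exactly \cite{ChenStableRKHS}[Lemma 4.1], which the paper invokes in Remark \ref{RemBell}). Your argument is correct and is essentially the standard proof from that literature: the closed-graph applications are legitimate ($\mathcal{H}$, $\ell_\infty$, $\ell_1$ are all Banach, and both graphs are closed because convergence in each space implies pointwise convergence, while the absolute convergence of each row sum $\sum_j u_j K_{ij}$ for every bounded $u$ forces each row into $\ell_1$), and the identity $\|v^{(N)}\|_{\mathcal{H}}^2=\sum_{i,j}u^{(N)}_iu^{(N)}_jK_{ij}$ together with the truncated sign vectors correctly yields the uniform bound $\sum_{i\le N}|f(i)|\le\sqrt{M}\,\|f\|_{\mathcal{H}}$.
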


\noindent This theorem, not surprisingly, shows that 
$\ell_{\infty}$ contains the key test functions to assess RKHS stability. 
But is it possible to find an alternative (and in some sense simpler) equivalent 
condition on $K$?
Following the discussions in \cite{Carmeli,DinuzzoSIAM15} subsequent to Theorem \ref{NecSuffStabKer},
kernel absolute summability, i.e. the property $\sum_{i,j} |K_{ij}|< \infty$, is an interesting candidate.
In fact, it is immediate to see that such condition is sufficient for stability but it is not yet known
if the equivalence with (\ref{CondNS}) holds. 
Hence, our problem is to understand if kernel summability is not only sufficient but also
necessary for a RKHS to be stable. 

\medskip

\begin{remark}
Theorem \ref{NecSuffStabKer} can be also described as follows. 
The kernel $K$ defines an acausal linear time-varying system: given an input (sequence) $u$,  
the output at instant $i$ is $\sum_{j=1}^\infty K_{ij} u_j$. Then, the kernel is stable  
if and only if such system maps every bounded input into a summable output.
RKHS stability thus involves (integral) linear operators from $\ell_{\infty}$ to $\ell_1$ and,
interestingly, we have not found any result on this kind of maps relevant for our analysis.
The reason is that the (control) literature has studied BIBO stability
considering linear transformations each representing 
a single dynamic system (and not a class of systems as done by a kernel). 
This has then produced 
conditions for an integral operator
to map $\ell_{\infty}$ into $\ell_{\infty}$, e.g. see \cite{Willems1970,Desoer1975}.
Our analysis is instead more difficult: to characterize
stable RKHSs it is necessary to consider a 
subclass of these operators with the range restricted to $\ell_1$ and
subject to the constraints (\ref{Kconstraint}).
\end{remark}

\section{A class of important matrices} \label{Sec3}
In this section we introduce and analyze a class of special matrices which will play a fundamental role 
to solve our problem.  First, it is useful to set up some additional notation. 
All the vectors are column vectors and, given $v$,  
$v_i$ represents its $i$-th entry. 
We use $p$ to indicate an integer ($p \ge 1$) that defines also 
the odd number  $m=2p+1$ and the corresponding power of two $n=2^m$. 
For any integer $r \ge 1$, we also introduce the following set
\begin{equation}\label{mathU}
{\mathcal U}_r := \{ \ v \in {\mathbb R}^r: v_i=\pm 1, \forall \ i=1,\dots,r \ \}.  
\end{equation}
Now, consider all the distinct vectors $v^{(i)} \in {\mathcal U}_m$ ($i=1,2,\dots,n$) consisting of exactly $m$ elements $\pm 1$ 
(ordering of the $v^{(i)}$ is irrelevant). Then, for any $n=2^3, 2^5, 2^7, \dots$, 
the special matrix $V^{(n)}$ of size $n \times m$ is given by
\begin{equation}\label{Vn}
V^{(n)}=\left[\begin{matrix} v^{(1)} & v^{(2)} & \dots & v^{(n)} \end{matrix} \right]^{\top}.
\end{equation}
For instance, if $p=1$ then $m=3$, $n=2^3=8$ and
\begin{equation}\label{VnExample} \small
V^{(8)}=\left(\begin{array}{ccc}1 & 1 & 1 \\1 & 1 & -1 \\1 & -1 & 1 \\1 & -1 & -1 \\-1 & 1 & 1 \\-1 & 1 & -1 \\-1 & -1 & 1 \\-1 & -1 & -1\end{array}\right)
\end{equation}
that shows how the rows of such matrices contain all the possible permutations of $\pm 1$.
We now introduce two norms for $V^{(n)}$.
The first one is 
\begin{equation}\label{1Norm}
\|V^{(n)}\|_1 := \sum_{i,j} \ |V^{(n)}_{ij}|
\end{equation}
where $V_{ij}^{(n)}$ are the entries (of values $\pm 1$) of $V^{(n)}$.
So,  $\|V^{(n)}\|_1$ is the $\ell_1$ norm 
understood as sum of the modules of all its entries. 
One thus has $\|V^{(n)}\|_1=mn$. 
The second alternative norm is 
\begin{equation}\label{OpNorm}
\|V^{(n)}\|_{\infty,1}:=\max_{\|u\|_{\infty}=1} \ \|V^{(n)}u\|_1.
\end{equation}
Note that (\ref{OpNorm}) is the norm of the linear operator $V^{(n)}:{\mathbb R}^m \rightarrow {\mathbb R}^n$ 
once ${\mathbb R}^m$ and ${\mathbb R}^n$ are equipped with the $\ell_{\infty}$ and the $\ell_1$ norms, respectively.
\medskip

\begin{lemma} \label{Lemma1} In evaluating $\|V^{(n)}\|_{\infty,1}$ in (\ref{OpNorm}), we can limit ourselves to consider the only vectors $u$ in ${\mathcal U}_m$ defined in (\ref{mathU}), i.e.
$$
\max_{\|u\|_{\infty}=1} \ \|V^{(n)}u\|_1 = \max_{u \in {\mathcal U}_m} \ \|V^{(n)}u\|_1.
$$
\end{lemma}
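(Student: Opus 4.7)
The plan is to exploit a standard convex-analytic fact: a convex function on a compact convex polytope attains its maximum at a vertex. Here the relevant polytope is the closed unit ball of $\ell_\infty$ in $\mathbb{R}^m$, namely the hypercube $[-1,1]^m$, whose set of extreme points coincides exactly with $\mathcal{U}_m$.

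First I would observe that the map
\begin{equation*}
f(u) := \|V^{(n)}u\|_1 = \sum_{i=1}^n \bigl| \sum_{j=1}^m V^{(n)}_{ij}\, u_j \bigr|
\end{equation*}
is convex on $\mathbb{R}^m$, since each summand is the absolute value of a linear functional and absolute values compose convexly with linear maps. I would also note that $f$ is positively homogeneous of degree one, so maximizing over $\{u : \|u\|_\infty = 1\}$ gives the same value as maximizing over the full cube $\{u : \|u\|_\infty \le 1\}$; thus we may freely replace the sphere constraint by the ball constraint, which is the natural convex domain.

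Next I would invoke the elementary observation that every $u \in [-1,1]^m$ admits a representation as a convex combination $u = \sum_{k=1}^{n} \lambda_k w^{(k)}$ with $w^{(k)} \in \mathcal{U}_m$, $\lambda_k \ge 0$ and $\sum_k \lambda_k = 1$ (the vertices of the cube are exactly the sign vectors). Applying Jensen's inequality gives
\begin{equation*}
f(u) \le \sum_{k=1}^{n} \lambda_k\, f(w^{(k)}) \le \max_{w \in \mathcal{U}_m} f(w),
\end{equation*}
so $\max_{\|u\|_\infty \le 1} f(u) \le \max_{w \in \mathcal{U}_m} f(w)$. The reverse inequality is immediate from $\mathcal{U}_m \subset \{u : \|u\|_\infty = 1\}$, and combining the two yields the claim.

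I do not foresee any real obstacle: the argument is a one-line consequence of convexity and positive homogeneity of $f$ together with the vertex structure of $[-1,1]^m$. The only subtlety worth flagging is the mismatch between the constraint $\|u\|_\infty = 1$ that appears in the definition of $\|V^{(n)}\|_{\infty,1}$ and the natural convex domain $\|u\|_\infty \le 1$, which is dispatched at once by homogeneity.
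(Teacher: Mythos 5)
Your proof is correct and rests on the same idea as the paper's: convexity of $u \mapsto \|V^{(n)}u\|_1$ forces the maximum over the cube onto its vertices, which are exactly the sign vectors of ${\mathcal U}_m$. The only cosmetic difference is that you write $u$ as a global convex combination of vertices and apply Jensen, whereas the paper rounds the coordinates one at a time using the endpoint property of convex functions on a segment; both are standard and equivalent here, and your explicit handling of the sphere-versus-ball constraint via positive homogeneity is a welcome clarification.
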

\begin{proof}
The proof exploits convexity and is reported just for sake of completeness. Letting $u=\left[\begin{matrix} a_1 & \dots & a_m \end{matrix} \right]^\top$, one has
$$
\|V^{(n)}u\|_1=\Sigma_{k=1}^n \ \left|\Sigma_{h=1}^m \ V_{kh}^{(n)}a_h\right|:=f(a_1,\dots,a_m).
$$
The function $f$ is convex over $\mathbb{R}^m$ (being the composition of convex maps given by absolute values and linear maps).
Fix any vector $a$ whose $i$-th entry satisfies $|a_i|<1$. 
Replace such entry with $1$, obtaining the vector $b$, or with $-1$, leading to $c$.
Convexity of $f$ thus ensures that \cite{RTR}
$$
f(a) \leq f(b) \ \ \mbox{or} \ \   f(a) \leq f(c).
$$
So, given any maximizer of the $f$ restricted over the compact ${\|u\|_{\infty}=1}$,
each of its entry (of modulus less than one) can be replaced with either  $1$ or $-1$ maintaining the optimality. 
\end{proof}
\medskip

\begin{lemma}  \label{Lemma2}  The value of $\|V^{(n)}u\|_1$, with $u \in {\mathcal U}_m$, is independent of the chosen $u$.
\end{lemma}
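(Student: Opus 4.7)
The plan is to exploit the sign-flip symmetry of the set $\mathcal{U}_m$. Writing the claim in components, since the rows of $V^{(n)}$ enumerate every vector in $\mathcal{U}_m$ exactly once, I have
\[
\|V^{(n)}u\|_1 \;=\; \sum_{v \in \mathcal{U}_m} |v^\top u|,
\]
so the task reduces to showing that this last sum does not depend on the particular $u \in \mathcal{U}_m$ chosen.

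The key observation is that each $u \in \mathcal{U}_m$ induces an involution on $\mathcal{U}_m$ via entrywise multiplication. Concretely, I would define $\phi_u : \mathcal{U}_m \to \mathcal{U}_m$ by $\phi_u(v) := (u_1 v_1, \ldots, u_m v_m)^\top$. Since $u_i^2 = 1$, the map $\phi_u$ is its own inverse and therefore a bijection of $\mathcal{U}_m$ onto itself. Moreover, by construction,
\[
v^\top u \;=\; \sum_{i=1}^{m} u_i v_i \;=\; \mathbf{1}^\top \phi_u(v),
\]
where $\mathbf{1}$ denotes the all-ones vector in $\mathbb{R}^m$.

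Re-indexing the sum by the bijection $w := \phi_u(v)$ then gives
\[
\sum_{v \in \mathcal{U}_m} |v^\top u| \;=\; \sum_{v \in \mathcal{U}_m} |\mathbf{1}^\top \phi_u(v)| \;=\; \sum_{w \in \mathcal{U}_m} |\mathbf{1}^\top w|,
\]
and the right-hand side has no dependence on $u$ whatsoever. This will conclude the proof.

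There is no real obstacle here beyond spotting the involution; the content is a one-line combinatorial symmetry argument, and the rest is bookkeeping. I might add a closing sentence noting that one may in fact evaluate the common value explicitly as $\sum_{w \in \mathcal{U}_m} |\sum_i w_i|$, which will likely be useful in the subsequent sections where $\|V^{(n)}\|_{\infty,1}$ is estimated.
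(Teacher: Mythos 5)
Your proof is correct and is essentially the paper's own argument in cleaner notation: the paper flips the sign of one entry of $u$ at a time and observes that this permutes the rows of $V^{(n)}$, which is exactly the bijection $\phi_u$ you write down in a single step. Both proofs rest on the same sign-flip symmetry of $\{\pm 1\}^m$, and your closing remark about the common value $\sum_{w}\left|\sum_i w_i\right|$ is precisely what the paper's Lemma 3 then computes.
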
 
\begin{proof} In evaluating $V^{(n)}u$, one can easily see that replacing the $i$-th entry of $u$ with its opposite
is equivalent to changing the sign of the $i-$th column of $V^{(n)}$. However, by the properties of $V^{(n)}$, changing the sign of a column 
corresponds to reordering the rows of $V^{(n)}$ since 
$\left[\begin{matrix} w_1^\top & \pm 1 & w_2^\top\end{matrix}\right]$ 
both belong to the list of $V^{(n)}$'s rows. In other words, changing the sign of any entry of $u$
just corresponds to a change of the sign of all the corresponding $V^{(n)}$'s columns that is 
equivalent to a suitable reordering of its rows. 
It is now clear that any $u \in {\mathcal U}_m$ leads to the same vector $V^{(n)}u$, apart from an entries reordering,
so that $\|V^{(n)}u\|_1$ does not depend on $u$.
\end{proof} 
\medskip

\begin{lemma} \label{Lemma3} The following relation holds true
$$
\|V^{(n)}\|_{\infty,1}=\|V^{(n)}u\|_1=2\Sigma_{h=0}^p \ \left(\begin{matrix} m \cr h \end{matrix} \right) \ (m-2h) 
$$
where $u$ is any vector in $\mathcal{U}_m$.
\end{lemma}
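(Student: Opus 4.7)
The plan is to chain together the two preceding lemmas with a direct combinatorial count. By Lemma \ref{Lemma1}, the supremum defining $\|V^{(n)}\|_{\infty,1}$ is attained on $\mathcal{U}_m$, and by Lemma \ref{Lemma2} the value $\|V^{(n)}u\|_1$ is the same for every $u \in \mathcal{U}_m$. Hence it suffices to evaluate $\|V^{(n)}u\|_1$ for one convenient choice of $u$, and the natural pick is $u = \mathbf{1} = (1,1,\dots,1)^\top \in \mathcal{U}_m$.

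With this choice, the $k$-th entry of $V^{(n)}u$ is simply the sum of the entries of the $k$-th row of $V^{(n)}$. Since the rows of $V^{(n)}$ enumerate all $2^m = n$ sign patterns of length $m$, a row containing exactly $h$ entries equal to $-1$ (and therefore $m-h$ entries equal to $+1$) contributes a row-sum of $(m-h) - h = m - 2h$. The number of rows with exactly $h$ entries equal to $-1$ is $\binom{m}{h}$, so
$$
\|V^{(n)}u\|_1 \;=\; \sum_{h=0}^{m} \binom{m}{h}\, |m-2h|.
$$

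The last step is to reduce this sum to the claimed form. Since $m = 2p+1$ is odd, $m-2h$ is never zero; moreover $m - 2h \ge 1 > 0$ precisely when $h \le p$, while $m-2h \le -1 < 0$ when $h \ge p+1$. Pairing the term indexed by $h$ with the term indexed by $m-h$ and using the symmetries $\binom{m}{h}=\binom{m}{m-h}$ and $|m-2h|=|m-2(m-h)|$, the two halves of the sum contribute equally, giving
$$
\sum_{h=0}^{m} \binom{m}{h}\, |m-2h| \;=\; 2\sum_{h=0}^{p} \binom{m}{h}(m-2h),
$$
which is the stated identity. Combining with Lemmas \ref{Lemma1} and \ref{Lemma2} yields $\|V^{(n)}\|_{\infty,1}=\|V^{(n)}u\|_1=2\sum_{h=0}^p \binom{m}{h}(m-2h)$ for every $u\in\mathcal{U}_m$.

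There is no real obstacle here: the content is already packed into Lemmas \ref{Lemma1} and \ref{Lemma2}, and what remains is a one-line combinatorial count together with a symmetry argument. The only mild care needed is to use that $m$ being odd guarantees $m-2h\neq 0$, so that the folding over $h\mapsto m-h$ produces exactly the factor $2$ and the sum over $h\in\{0,\dots,p\}$ without any correction term at the ``middle''.
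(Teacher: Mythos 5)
Your proof is correct and follows essentially the same route as the paper's: reduce to $u=(1,\dots,1)^\top$ via Lemmas \ref{Lemma1} and \ref{Lemma2}, count the $\binom{m}{h}$ rows with $h$ minus signs each contributing $|m-2h|$, and fold the sum by the symmetry $h\mapsto m-h$. Your writeup is slightly more explicit about why oddness of $m$ makes the folding clean, but the argument is the same.
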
 
\begin{proof} By Lemmas \ref{Lemma1} and \ref{Lemma2}, we easily have 
$$
\|V^{(n)}\|_{\infty,1}=\|V^{(n)}u\|_1 \quad \forall u \in {\mathcal U}_m.
$$
Thus, we can choose $u=v^{(1)}:=\left[\begin{matrix} 1 & \dots & 1\end{matrix}\right]^\top$ and evaluate $\|V^{(n)}v^{(1)}\|_1=\|w\|_1$, where $w_i=\Sigma_{j=1}^m \ V_{ij}^{(n)}=\Sigma_{j=1}^m \ v_j^{(i)}, \ i=1,2,\dots,n$, with $V_{ij}^{(n)}$ and $v_{j}^{(i)}$  denoting the entries of $V^{(n)}$ and $v^{(i)}$, respectively. 
The number of vectors  $v^{(i)}$ containing $h$ negative signs and $m-h$ positive signs is $\left(\begin{matrix} m \cr h \end{matrix}\right)$.
In addition, for such kind of vectors one has $\Sigma_{j=1}^m \ v_j^{(i)}=|m-2h|$.
Then, we easily obtain 
$$
\|V^{(n)}\|_{\infty,1}=\|V^{(n)}v^{(1)}\|_1=\Sigma_{h=0}^{m} \ \left(\begin{matrix} m \cr h \end{matrix}\right) \ |m-2h|=2\Sigma_{h=0}^p \ \left(\begin{matrix} m \cr h\end{matrix}\right) \ (m-2h)
$$
where the last equality derives from the symmetry of the two cases $h \le p$ and $h > p$. This concludes the proof. 
\end{proof} 



\section{The first counterexample}\label{Sec4}

In this Section we will obtain a first result about operators from 
$\ell_{\infty}$ into $\ell_1$ induced by infinite-dimensional matrices that are not subject to 
the positive semidefinite constraints (\ref{Kconstraint}). 
As it will be clear in the final part of the paper, this intermediate step
will be crucial for solving the question regarding RKHS stability.\\
Thanks to the results obtained in the previous section, 
the following equalities regarding two norms
are now available: 
\begin{eqnarray}\label{Norm1}
\|V^{(n)}\|_1&=& nm\\ \label{Norm2}
\|V^{(n)}\|_{\infty,1} &=& 2\Sigma_{h=0}^p \ \left(\begin{matrix}m \cr h \end{matrix} \right) \ (m-2h)
\end{eqnarray}
However, the expression of $\|V^{(n)}\|_{\infty,1}$ is not so appealing: 
evaluation is not available in closed form and appears somewhat complicated. 
Actually, the important point is the comparison between (\ref{Norm1}) and (\ref{Norm2}) for large $p$. 
For this reason, the next lemma defines the behaviour of $\|V^{(n)}\|_{\infty,1}$ as $p$ tends to $\infty$. 
It relies on a classical result of Probability Theory, the Central Limit Theorem.
\medskip

\begin{lemma}\label{Lemma4}
One has 
$$
\Sigma_{h=0}^p \ \left(\begin{matrix}m \cr h\end{matrix} \right)(m-2h) \simeq n\sqrt{\frac{p}{\pi}} \ \quad \mbox{as} \quad p \rightarrow +\infty.
$$
\end{lemma}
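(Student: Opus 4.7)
The plan is to recognize the sum as (half of) an absolute first moment of a binomial-type random variable and then invoke the Central Limit Theorem.

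Concretely, I would introduce i.i.d.\ Rademacher random variables $X_1,\dots,X_m$ with $\mathbb{P}(X_i=1)=\mathbb{P}(X_i=-1)=1/2$ and set $S_m=\sum_{i=1}^m X_i$. Then $\mathbb{P}(S_m=m-2h)=\binom{m}{h}/2^m$ for $h=0,1,\dots,m$, so
$$
\mathbb{E}|S_m| \;=\; \frac{1}{2^m}\sum_{h=0}^m \binom{m}{h}\,|m-2h|.
$$
Since $m=2p+1$ is odd, $m-2h>0$ exactly for $h\le p$ and $m-2h<0$ for $h\ge p+1$. The substitution $h\mapsto m-h$ together with $\binom{m}{h}=\binom{m}{m-h}$ shows that the two halves of the sum are equal, giving
$$
\sum_{h=0}^{p}\binom{m}{h}(m-2h) \;=\; \frac{2^m}{2}\,\mathbb{E}|S_m| \;=\; \frac{n}{2}\,\mathbb{E}|S_m|.
$$

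Next I would apply the CLT: $S_m/\sqrt{m}$ converges in distribution to a standard normal $Z$. Since $\mathbb{E}[S_m^2/m]=1$, the family $\{S_m/\sqrt{m}\}$ is bounded in $L^2$, hence uniformly integrable, so convergence of first absolute moments follows:
$$
\frac{\mathbb{E}|S_m|}{\sqrt{m}} \;\longrightarrow\; \mathbb{E}|Z| \;=\; \sqrt{\tfrac{2}{\pi}}.
$$
Therefore $\mathbb{E}|S_m|\sim\sqrt{2m/\pi}$. Plugging this into the identity above yields
$$
\sum_{h=0}^{p}\binom{m}{h}(m-2h) \;\sim\; \frac{n}{2}\sqrt{\frac{2m}{\pi}} \;=\; n\sqrt{\frac{m}{2\pi}},
$$
and using $m=2p+1\sim 2p$ as $p\to\infty$ gives the claimed asymptotic $n\sqrt{p/\pi}$.

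The only conceptually nontrivial step is passing from convergence in distribution to convergence of first absolute moments; this is the place where a reader might want justification, but the uniform integrability argument above (via the uniform $L^2$-bound) settles it cleanly. The remaining manipulations (the probabilistic reinterpretation of the sum, the symmetry of binomial coefficients, and the replacement $m\sim 2p$) are routine. An alternative, purely combinatorial route would use Stirling's formula on $\binom{m}{h}$ to approximate the sum by a Gaussian integral over $h$, but the CLT-based argument is shorter and is the one suggested by the statement of the lemma.
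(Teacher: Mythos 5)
Your proof is correct, and while it shares the paper's basic idea (a probabilistic reading of the sum plus the Central Limit Theorem), the execution is genuinely different and in one respect tighter. The paper keeps the one-sided sum $\sum_{h=0}^{p}$ as a \emph{truncated} expectation $\E\big(f(x)\big)$ with $f(x)=(m-2x)\1_{\{x\le p\}}$ and then evaluates it by replacing the binomial law with a Gaussian density and computing the resulting integral $\int_0^{m/2}$ explicitly, obtaining $\sqrt{m/(2\pi)}\,[1-e^{-m/2}]$. That step silently upgrades the distributional normal approximation to an approximation of a truncated first moment (essentially a local limit / Riemann-sum argument), which is left unjustified. You instead symmetrize first, using $\binom{m}{h}=\binom{m}{m-h}$ and the oddness of $m$ to identify the one-sided sum with half of the full absolute moment $\E|S_m|$, and then pass from convergence in distribution to convergence of $\E|S_m|/\sqrt{m}$ via the uniform $L^2$ bound and uniform integrability. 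This is a standard and fully rigorous route to $\E|S_m|\sim\sqrt{2m/\pi}$, and the final bookkeeping ($n=2^m$, $m\sim 2p$) matches the paper's constant $n\sqrt{p/\pi}$. In short: same probabilistic reinterpretation, but your symmetry-plus-uniform-integrability decomposition avoids the paper's informal integral approximation and would be the preferable write-up if one wanted an airtight proof.
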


\begin{proof}  Let $x \sim {\mathcal B}(m)$ be a binomial random variable 
assuming value 0 or 1 with equal probability, i.e. 
$$
\P(x=h)=\frac{1}{n}\left(\begin{matrix} m \cr h \end{matrix} \right), \ h=0,1,2,\dots,m.
$$
Its mean and variance are so given by
$$
\E(x)=\frac{m}{2}, \quad \Var(x)=\frac{m}{4}.
$$
By defining $f(x)=m-2x$ for $x \le p$ and $f(x)=0$ elsewhere, one has
$$
\E\big(f(x)\big)=\Sigma_{h=0}^p \ \P(x=h)(m-2x)=\frac{1}{n} \Sigma_{h=0}^p \ \left(\begin{matrix}m \cr h \end{matrix}  \right)(m-2h).
$$
For $p$ and, consequently, $m=2p+1$ as well as $n=2^m$ large enough,
the evaluation of $\E\big(f(x)\big)$ can be obtained through the normal approximation.
Letting $\Phi(a;b,c)$ be the Gaussian distribution evaluated at $a$ with mean $b$ and variance $c$, one has
$$
\P(x \leq a) \simeq \Phi\Big(a;\frac{m}{2},\frac{m}{4}\Big).
$$
Hence, we obtain
\begin{eqnarray*}
\E\big(f(x)\big) &\simeq& \int_0^{\frac{m}{2}} \ \sqrt{\frac{2}{\pi m}}e^{-\frac{\left(x-\frac{m}{2}\right)^2}{\frac{m}{2}}}(m-2x)dx \\
&=&-2\int_0^{\frac{m}{2}} \ \sqrt{\frac{2}{\pi m}}e^{-\frac{\left(x-\frac{m}{2}\right)^2}{\frac{m}{2}}}\left(x-\frac{m}{2}\right)dx \\
&=&-2\int_{-\frac{m}{2}}^0 \ \sqrt{\frac{2}{\pi m}}e^{-\frac{y^2}{\frac{m}{2}}}ydy =  \sqrt{\frac{m}{2 \pi}}\int_{-\frac{m}{2}}^0 \ \left[-e^{-\frac{2y^2}{m}}\right]d\left(\frac{2y^2}{m}\right) \\
&=& \sqrt{\frac{m}{2\pi}}\left[1-e^{-\frac{m}{2}}\right] \simeq \sqrt{\frac{p}{\pi}}.
\end{eqnarray*}
For large $p,m,n$ this indeed implies 
$$
\Sigma_{h=0}^p \ \left(\begin{matrix}m \cr h\end{matrix} \right)(m-2h) \simeq n\sqrt{\frac{p}{\pi}}
$$
and completes the proof.
\end{proof}
\medskip

Now, recall from the discussion in Section \ref{Sec2} that we are interested
in linear operators from $\ell_{\infty}$ into $\ell_1$ defined by means of an infinite matrix, i.e.
\begin{equation}\label{OpV}
{\mathcal V}: \ \ell_{\infty} \ \rightarrow \ \ell_1, \quad {\mathcal V}(u)=y
\end{equation}
where
\begin{equation*}
y_i=\sum_{h=1}^{+\infty} \ V_{ih}u_h, \ i=1,2,\dots
\end{equation*}
Defining the $\ell_1$ norm of ${\mathcal V}$ as 
\begin{equation}\label{Norm1Op}
\|{\mathcal V}\|_1:=\Sigma_{i,j=1}^{+\infty} \ |V_{ij}|
\end{equation}
we say that ${\mathcal V}$ is absolutely summable if and only if $\|{\mathcal V}\|_1<+\infty$. 
The $\|{\mathcal V}\|_1$ is different from the norm of the operator, defined by 
\begin{equation}\label{Norm2Op}
\|{\mathcal V}\|:=\max_{\|u\|_{\infty}=1} \ \|{\mathcal V}u\|_1
\end{equation}
($\sup$ is usually used instead of $\max$, but as clear in what follows no distinction is needed).
Then, the linear operator ${\mathcal V}$ is bounded (continuous) if and only if $\|{\mathcal V}\|<+\infty$. 
While absolute summability implies boundedness, the converse is false as the next explicit counterexample 
(that represents the first main result of this paper) will show. 
\medskip

\begin{counterexample}  A linear operator ${\mathcal V}: \ \ell_{\infty} \ \rightarrow \ \ell_1$ can be bounded 
even if it is not absolutely summable. 
\end{counterexample}

\begin{proof} Consider the following version of $V^{(n)}$ suitably scaled in such a way that its $\ell_1$ norm becomes $\frac{1}{p}$:
$$
V_*^{(n)}:=\frac{1}{pmn}V^{(n)}=\frac{1}{p\|V^{(n)}\|_1}V^{(n)}.
$$
One thus also has
$$
\|V_*^{(n)}\|_1=\frac{1}{pmn}\|V^{(n)}\|_1=\frac{1}{p}.
$$

Recalling (\ref{Norm1}), (\ref{Norm2}) and using Lemma \ref{Lemma4}, it follows that
\begin{equation}\label{A}
\frac{\|V^{(n)}\|_{\infty,1}}{\|V^{(n)}\|_1} \simeq \frac{2n\sqrt{\frac{p}{\pi}}}{nm} \simeq \frac{1}{\sqrt{\pi p}}
\end{equation}
for $p,m,n$ large enough.
From such equation, one also easily obtains that
$$
\|V_*^{(n)}\|_{\infty,1} \simeq \frac{1}{p\sqrt{\pi p}}
$$
still for $p, m, n$ large enough. 
Now, let us define the following infinite matrix 
$$
V=\diag(V_*^{(n(1))}, V_*^{(n(2))}, \dots),
$$
where $n( p ):=2^{2p+1}$. The block diagonal structure allows to partition $u \in \ell_{\infty}$ as 
$$
u=\left[\begin{matrix}u_3^\top & u_5^\top & \dots\end{matrix} \right]^\top
$$ and 
similarly $y \in \ell_1$ as 
$$
y=\left[\begin{matrix}y_{n(1)}^\top & y_{n(2)}^\top & \dots\end{matrix} \right]^\top.
$$ 
Moreover, any finite subvector $y_{n(p)}$ 
only depends on $u_{2p+1}$ by means of the matrix $V_*^{(n(p))}$. We then obtain that the 
linear operator ${\mathcal V}$ associated with the infinite matrix $V$ satisfies
$$
\|{\mathcal V}\|_1=\sum_{p=1}^{+\infty} \ \frac{1}{p} = +\infty
$$
and
$$
\ \|{\mathcal V}\|=\sum_{p=1}^{+\infty} \ \|V_*^{(n(p))}\|_{\infty,1}<+\infty,
$$
as a simple consequence of the convergence of the series $\sum_{p=1}^{+\infty} \ \frac{1}{p\sqrt{p}}$. 
Therefore, we have indeed found an operator ${\mathcal V}$ associated with the infinite matrix $V$ 
that is bounded even if $V$ is not absolutely summable. 
\end{proof}
\medskip
\begin{remark} The comment under (\ref{Norm2Op}) on the use of $\max$ in place of $\sup$ finds now the following explanation.
For all the inputs $u$ in the set ${\mathcal U}_{\infty}:=\{ \ u \in \ell_{\infty}: \ u_i=\pm 1, \ \forall \ i=1,2,\dots \ \}$, 
that corresponds to the infinite-dimensional version of (\ref{mathU}), it holds that 
$$
\|y\|_1=\|{\mathcal V}u\|_1=\|{\mathcal V}\|=\|{\mathcal V}\| \cdot \|u\|_{\infty}.
$$
\end{remark}

\section{Some properties of symmetric positive semidefinite matrices and problem reduction to finite-dimensional spaces}
\label{Sec5}

In the previous part we have provided some new insights on the maps 
from $\ell_{\infty}$ into $\ell_1$ without considering the constraints
(\ref{Kconstraint}). Now, we want to address the 
symmetric and positive semidefinite case.\\ 
In what follows, $M^{(k)}$ indicates a matrix of size $k \times k$
satisfying $M^{(k)}=M^{(k)\top} \ge 0$. Thus, it belongs to 
the set of $k \times k$ symmetric and positive semidefinite matrices that we denote by
${\mathcal C}_k$.  
As before, we are interested in obtaining relationships between the two norms $\|M^{(k)}\|_1$ and $\|M^{(k)}\|_{\infty,1}$
defined exactly as in  (\ref{1Norm}) and (\ref{OpNorm}). The sequence of real numbers $\lambda(k)$
introduced in the next lemma provides a fundamental connection.\\ 

\begin{lemma} \label{Lemma5}  For any $k \ge 1$, 
$$
\lambda(k):=\min_{M^{(k)} \in {\mathcal C}_k: \ \|M^{(k)}\|_1=1} \ \|M^{(k)}\|_{\infty,1}
$$
is well-defined and satisfies the following properties:\\

\begin{itemize}
\item $0 \le \lambda(k) \le 1$;
\item $\|M^{(k)}\|_1 \ge \|M^{(k)}\|_{\infty,1} \ge \lambda(k)\|M^{(k)}\|_1$, for any $M^{(k)} \in {\mathcal C}_k$;
\item there exist at least two matrices $M_1^{(k)}, M_2^{(k)} \in {\mathcal C}_k$ such that 
$$
\|M_1^{(k)}\|_{\infty,1}=\|M_1^{(k)}\|_1
$$ 
and 
$$
\|M_2^{(k)}\|_{\infty,1}=\lambda(k)\|M_2^{(k)}\|_1
$$ 
(that is equivalent to saying that better bounds cannot be found);
\item the sequence $\lambda(k)$ is monotone non-increasing.
\end{itemize}

\end{lemma}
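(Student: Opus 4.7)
The plan is to handle the four bullets in the order stated, since later claims build on the inequality in the second bullet. I expect the whole lemma to be essentially routine once well-definedness is established via a compactness argument; the real content is not in the proof but in how the sequence $\lambda(k)$ will later be used to reduce the RKHS stability problem to finite dimensions.

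First, for well-definedness, I would observe that the feasible set $\{M \in \mathcal{C}_k : \|M\|_1 = 1\}$ is closed (intersection of the closed cone $\mathcal{C}_k$ with a closed level set) and bounded in $\mathbb{R}^{k\times k}$, hence compact. Since $M \mapsto \|M\|_{\infty,1}$ is continuous (it is a semi-norm on matrix space, continuous in any finite-dimensional norm), Weierstrass guarantees the minimum is attained. Non-negativity $\lambda(k) \ge 0$ is immediate from $\|\cdot\|_{\infty,1} \ge 0$.

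Second, I would prove the general inequality $\|M\|_{\infty,1} \le \|M\|_1$ valid for \emph{any} matrix (positive semidefiniteness is not needed here), by the routine bound
\[
\|Mu\|_1 = \sum_i \Bigl|\sum_j M_{ij} u_j\Bigr| \le \sum_{i,j} |M_{ij}||u_j| \le \|M\|_1 \quad \text{when } \|u\|_\infty = 1.
\]
The reverse direction $\|M\|_{\infty,1} \ge \lambda(k)\|M\|_1$ for general $M \in \mathcal{C}_k$ follows by normalization: if $M \ne 0$, then $M/\|M\|_1$ lies in the feasible set defining $\lambda(k)$, so its operator norm is at least $\lambda(k)$; rescaling yields the bound, and the case $M=0$ is trivial. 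Applying the first inequality to any minimizer gives $\lambda(k) \le 1$, completing the first bullet.

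Third, for the tightness (third bullet), the upper bound is attained by $M_1 = \mathbf{1}\mathbf{1}^\top$, which is rank-one PSD with $\|M_1\|_1 = k^2$, and evaluating on $u = \mathbf{1}$ gives $\|M_1 \mathbf{1}\|_1 = k^2$, so $\|M_1\|_{\infty,1} = \|M_1\|_1$. The lower bound is attained by any minimizer $M_2$ produced by the compactness argument in the first step. Finally, for monotonicity (fourth bullet), I would take a minimizer $M^{(k)}$ of $\lambda(k)$ and pad it as
\[
M^{(k+1)} = \begin{pmatrix} M^{(k)} & 0 \\ 0^\top & 0 \end{pmatrix},
\]
which is still PSD, has the same $\|\cdot\|_1$ norm, and the same operator norm (since the extra coordinate multiplies a zero column/row and contributes nothing to $\|M^{(k+1)} u\|_1$). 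Hence $\lambda(k+1) \le \|M^{(k+1)}\|_{\infty,1} = \lambda(k)$. The only mildly subtle point—and the closest thing to an obstacle—is justifying that padding really preserves the operator norm; this comes down to the fact that the last coordinate of $u$ does not enter $\|M^{(k+1)}u\|_1$ at all, so the supremum over $\|u\|_\infty = 1$ in dimension $k+1$ equals the supremum in dimension $k$.
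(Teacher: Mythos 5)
Your proof is correct and follows essentially the same route as the paper's: a compactness/Weierstrass argument for the existence of a minimizer, the elementwise triangle-inequality bound for $\|M\|_{\infty,1}\le\|M\|_1$, normalization for the lower bound, and zero-padding for monotonicity. The only (immaterial) difference is your choice of witness for tightness of the upper bound, $M_1=\mathbf{1}\mathbf{1}^\top$ instead of the paper's $I_k$; both work.
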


\begin{proof} Assume $u \in {\mathbb R}^k$ and $\|u\|_{\infty}=1$. By denoting with $M_{ij}^{(k)}$ and $u_j$ the entries of $M^{(k)}$ and $u$, respectively, we have
\begin{eqnarray*}
\|M^{(k)} u\|_1&=&\Sigma_{i=1}^k \ \left|\Sigma_{j=1}^k \ M_{ij}^{(k)}u_j \right|\\
&\le& \Sigma_{i=1}^k \ \Sigma_{j=1}^k \ \left|M_{ij}^{(k)}u_j \right| = \Sigma_{i,j=1}^k \ \left|M_{ij}^{(k)}\right|\left|u_j\right|  \\
&\le& \Sigma_{i,j=1}^k \ \left|M_{ij}^{(k)}\right|=\|M^{(k)}\|_1, 
\end{eqnarray*}
and this shows that 
$$
 \|M^{(k)}\|_1 \ge \|M^{(k)}\|_{\infty,1}.
$$ 
Now, if $I_k$ is the identity matrix $k \times k$, by resorting to $u=\left[\begin{matrix} 1 & \dots & 1\end{matrix}\right]^T$
one obtains $\|I_k\|_{\infty,1}= \|I_k\|_1=k$ thus proving the existence of $M_1^{(k)}=I_k$. 
Now, let's define
$$
f:{\mathcal C}_k / \{ \ 0 \ \} \ \rightarrow \ {\mathbb R}, \ f(M_{(k)}):=\frac{\|M^{(k)}\|_{\infty,1}}{\|M^{(k)}\|_1}.
$$
Such function is continuous since $\|M^{(k)}\|_1=0$ if and only if  $M^{(k)}=0$ and because  $\|M^{(k)}\|_{\infty,1}$ as well as $\|M^{(k)}\|_1$ 
are continuous maps of the $M^{(k)}$'s entries (thanks also to Lemma \ref{Lemma1} that clearly holds true even for square matrices). Since $f(\alpha M^{(k)})=f(M^{(k)})$ for any $\alpha > 0$, to assess the values that such function can assume it suffices to consider the matrices satisfying $\|M^{(k)}\|_1=1$. The corresponding subset ${\mathcal S}_k$ of ${\mathcal C}_k$ is a compact set\footnote{Denoting by $m_{ij}$ the entries of $M^{(k)}$, we have a set of equalities/inequalities which define the structure of ${\mathcal S}_k$:
\begin{itemize}
\item $m_{ij}=m_{ji}$ for any $i,j$ (due to the symmetry constraint);
\item $\Sigma_{ij} \ |m_{ij}|=1$ (due to the unit $\ell_1$ norm constraint);
\item various polynomial inequalities of the (closed) form $p_h(m_{ij}) \ge 0$ (due to the set of Sylvester's inequalities).
\end{itemize}
These set of conditions makes ${\mathcal S}_k$ bounded - because of the second equality - and closed - as a consequence of the equality/(closed) inequalities. Compactness is therefore guaranteed.}, hence $f$ admits both a minimum and a maximum. The maximum corresponds to 1 (since, as already seen, $\|M^{(k)}\|_{\infty,1} \le \|M^{(k)}\|_1$ and thanks to the existence of $M_1^{(k)}$), the minimum is non-negative and not larger than 1. Consequently, at least a matrix $M_2^{(k)}$ exists that defines the minimum value, i.e. $\lambda(k)$. Finally, since the block diagonal matrix $\diag(0,M^{(k)})$ (with $0$ of size $1 \times 1$) belongs to ${\mathcal C}_{k+1}$ for any $M^{(k)} \in {\mathcal C}_k$, and since the two norms for $\diag(0,M^{(k)})$ coincide with those of $M^{(k)}$, the last property $\lambda(k+1) \le \lambda(k)$ is immediately obtained.
\end{proof}

\medskip

The sequence $\lambda(k)$ plays a central role for our analysis.
In fact, it is now shown that the asymptotic behavior of $\lambda(k)$ uniquely determines whether absolute summability is or not a necessary and sufficient condition for a symmetric positive semidefinite operator to map all the space $\ell_{\infty}$ into $\ell_1$. 
This fact represents the second main result of this paper and
is contained in the next theorem. When reading it, recall from \cite{ChenStableRKHS}[Lemma 4.1]
that if an integral operator maps the entire $\ell_{\infty}$ into $\ell_1$ then it must be bounded (this point is further discussed in Remark  \ref{RemBell}).

 
\medskip

\begin{theorem} \label{lambda} Let $\lambda_{\infty}:=\lim_{k \rightarrow +\infty} \ \lambda(k)$. Then, $\lambda_{\infty}>0$ implies that absolute summability is a necessary and sufficient condition for a symmetric positive semidefinite operator ${\mathcal M}$ from $\ell_{\infty}$ into $\ell_1$ to be bounded. 
Instead, $\lambda_{\infty}=0$ implies that there exist bounded symmetric positive semidefinite operators ${\mathcal M}$ from $\ell_{\infty}$ into $\ell_1$  that are not absolutely summable.  
\end{theorem}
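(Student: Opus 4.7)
The plan is to tackle the two implications by separately reducing the infinite-dimensional question to the finite-dimensional framework encoded by Lemma \ref{Lemma5}. In one direction, truncation to principal $k \times k$ submatrices lets us compare the operator norm of $\mathcal{M}$ with the scalar $\lambda(k)$, yielding absolute summability in the limit $k \to \infty$. In the other direction, a block-diagonal construction in the spirit of the counterexample in Section \ref{Sec4} lets us amplify the gap between the two norms on the extremal matrices $M_2^{(k)}$ guaranteed by Lemma \ref{Lemma5}.

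For the first implication ($\lambda_\infty > 0$ implies necessity), the sufficiency direction is immediate from $\|M\|_{\infty,1} \le \|M\|_1$. For necessity, I would assume $\mathcal{M}$ is bounded with associated infinite matrix $M$, and let $M_k$ denote its principal $k \times k$ submatrix; this matrix is symmetric positive semidefinite, hence $M_k \in \mathcal{C}_k$. By extending any $u \in \mathbb{R}^k$ with $\|u\|_\infty \le 1$ to a sequence $\tilde u \in \ell_\infty$ via zero-padding, one obtains $\|M_k u\|_1 \le \|\mathcal{M}\tilde u\|_1 \le \|\mathcal{M}\|$, and therefore $\|M_k\|_{\infty,1} \le \|\mathcal{M}\|$. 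Applying Lemma \ref{Lemma5} to $M_k$ then gives $\lambda(k)\|M_k\|_1 \le \|M_k\|_{\infty,1} \le \|\mathcal{M}\|$, and since $\lambda(k) \ge \lambda_\infty > 0$ by monotonicity, $\|M_k\|_1 \le \|\mathcal{M}\|/\lambda_\infty$ uniformly in $k$. Monotone convergence $\|M_k\|_1 \nearrow \|M\|_1$ then yields $\|M\|_1 < +\infty$.

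For the second implication ($\lambda_\infty = 0$ implies the existence of a bounded but non-absolutely-summable symmetric PSD operator), I would extract an increasing sequence $k_1 < k_2 < \dots$ such that $\lambda(k_j) < 2^{-j}$, which is possible since $\lambda(k) \to 0$. By Lemma \ref{Lemma5}, for each $j$ there is a matrix $M_2^{(k_j)} \in \mathcal{C}_{k_j}$ achieving $\|M_2^{(k_j)}\|_1 = 1$ and $\|M_2^{(k_j)}\|_{\infty,1} = \lambda(k_j)$. I then scale by $1/j$ and assemble the infinite block-diagonal matrix $M = \mathrm{diag}\bigl(\tfrac{1}{1}M_2^{(k_1)}, \tfrac{1}{2}M_2^{(k_2)}, \dots\bigr)$. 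This remains symmetric and positive semidefinite, since any finite index subset splits across blocks and the quadratic form in (\ref{Kconstraint}) decomposes into a sum of non-negative block contributions. The $\ell_1$ norm satisfies $\|M\|_1 = \sum_j 1/j = +\infty$, so $M$ is not absolutely summable, while partitioning any $u \in \ell_\infty$ into its block components $u_j$ and estimating blockwise gives $\|\mathcal{M}u\|_1 = \sum_j \|\tfrac{1}{j}M_2^{(k_j)} u_j\|_1 \le \|u\|_\infty \sum_j \lambda(k_j)/j < \|u\|_\infty \sum_j 2^{-j}/j < +\infty$, so $\mathcal{M}$ is bounded and maps $\ell_\infty$ into $\ell_1$.

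The heavy lifting has essentially already been done in Lemma \ref{Lemma5}: that lemma both provides the finite-dimensional inequality $\|M^{(k)}\|_{\infty,1} \ge \lambda(k)\|M^{(k)}\|_1$ and guarantees the existence of matrices that saturate it. The remaining work is purely structural, and the only delicate step is the truncation bound $\|M_k\|_{\infty,1} \le \|\mathcal{M}\|$, which the zero-padding argument handles cleanly. The block-diagonal construction for Part 2 mirrors the one in Section \ref{Sec4} verbatim, with the key addition that the building blocks now satisfy the PSD constraint (\ref{Kconstraint}) and that their direct sum preserves it.
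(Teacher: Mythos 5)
Your proposal is correct and follows essentially the same route as the paper: both directions reduce to the finite-dimensional inequality $\lambda(k)\|M^{(k)}\|_1 \le \|M^{(k)}\|_{\infty,1}$ from Lemma \ref{Lemma5}, with the first part handled by truncation to principal submatrices (you argue directly where the paper argues by contraposition, and you make explicit the zero-padding step the paper leaves implicit) and the second by the same block-diagonal assembly of extremal matrices normalized so that the $\ell_1$ norms diverge while the operator norms form a convergent series. The differences are only cosmetic bookkeeping ($2^{-j}/j$ versus $1/k^2$), so no further comparison is needed.
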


\begin{proof} Since $\lambda(k)$ is monotone non-increasing and lower bounded by 0, $\lambda_{\infty} \ge 0$ exists.
 Assume that $\lambda_{\infty}>0$ and $\|{\mathcal M}\|_1=+\infty$.
 Let also $N>0$ be a fixed real number and denote with $Q_k$
 the sequence of finite submatrices $Q_k$, of size $k \times k$,
 built with the first $k$ rows and columns of the infinite matrix which defines ${\mathcal M}$.
 Since $\|{\mathcal M}\|_1=+\infty$, the 
 $\|Q_k\|_1$ represent a monotone non-decreasing sequence and
 one has
 $$
 \lim_{k \rightarrow +\infty} \ \|Q_k\|_1=+\infty.
 $$ 
So, there exists $k(N)>0$ such that $\|Q_r\|_1 \ge \frac{N}{\lambda_{\infty}}$ for any $r \ge k(N)$, and this implies
$$
\|Q_r\|_{\infty,1} \ge \lambda( r )\|Q_r\|_1 \ge \lambda_{\infty} \|Q_r\|_1 \ge N, \ \forall \ r \ge k(N).
$$
Since $N>0$ was arbitrary, one also has 
$$
\lim_{k \rightarrow +\infty} \ \|Q_k\|_{\infty,1}=+\infty
$$
which clearly prevents ${\mathcal M}$ to be a bounded operator. 
On the other hand, the condition $\|{\mathcal M}\|_1<+\infty$ clearly implies that ${\mathcal M}$ is bounded.
So, we have proved that absolute summability is the necessary and sufficient condition for the operator ${\mathcal M}$ to be bounded
if $\lambda_{\infty}>0$.\\
Assume now that $\lambda_{\infty}=0$. In this case, there exists a sequence $n(k)$ such that $\lambda(n(k)) \le \frac{1}{k}$. 
According to Lemma \ref{Lemma5}, consider matrices $Q_{n(k)} \ne 0$  corresponding to $n(k)$ and such that
$$
\|Q_{n(k)}\|_{\infty,1}=\lambda_{n(k)}\|Q_{n(k)}\|_1 \le \frac{1}{k}\|Q_{n(k)}\|_1.
$$
Similarly to what done in Section \ref{Sec2} let us normalize $Q_{n(k)}$ in such a way that its $\ell_1$ norm becomes $\frac{1}{k}$,
i.e. we define 
$$
S_k:=\frac{Q_{n(k)}}{k\|Q_{n(k)}\|_1}.
$$
so that
$$
\|S_k\|_1=\frac{1}{k} 
$$
and
$$
\|S_k\|_{\infty,1} \le \frac{1}{k} \|S_k\|_1 \le \frac{1}{k^2}.
$$
Now, the desired counterexample is found by choosing the infinite matrix $M$ that defines ${\mathcal M}$ as follows 
$$
M=\diag(S_1,S_2,\dots).
$$
In fact, the equalities 
$$
\|{\mathcal M}\|_1=\Sigma_{k=1}^{+\infty} \ \frac{1}{k} = +\infty
$$
and
$$
\|{\mathcal M}\| \le \Sigma_{k=1}^{+\infty} \ \frac{1}{k^2} < +\infty
$$
show that ${\mathcal M}$ is a bounded operator which is not absolutely summable.
\end{proof}
\medskip

\begin{remark} \label{RemBell}The previous theorem gives the necessary and sufficient condition for the (possible) existence of symmetric positive semidefinite bounded operators which are not absolutely summable. On the other hand, RKHS stability is related to operators which are well-defined over the whole $\ell_{\infty}$ (${\mathcal M}u$ has to belong to $\ell_1$ for any $u \in \ell_{\infty}$), a property that would seem to be different from boundedness. But, as already recalled in introducing 
Theorem \ref{lambda}, while boundedness obviously implies well-definiteness (so, if $\lambda_{\infty}=0$ there is nothing else to prove), the converse also holds true as a consequence of 
 \cite{ChenStableRKHS}[Lemma 4.1]. This fact would be fundamental in the case $\lambda_{\infty}>0$
to show that absolute summability is equivalent to RKHS stability. However, we will prove in the next section that $\lambda_{\infty}=0$ and this makes the outcomes in  \cite{ChenStableRKHS} 
irrelevant for our developments. 
\end{remark}


\section{A class of important positive semidefinite matrices}\label{Sec6}

In this section we analyze properties of some key symmetric and positive semidefinite matrices
that will lead to the building of the second counterexample (and, hence, to the solution of our main problem).
Remarkably, such matrices are 
defined in terms of the matrices $V^{(n)}$ already encountered in the previous sections.
They are in fact given by 
$$
M^{(n)}=V^{(n)} V^{(n)\top}.
$$

\begin{lemma}\label{Lemma6} The columns of the matrix $V^{(n)}$ are orthogonal each other and one has
$$
V^{(n)\top}  V^{(n)} = n I_m, 
$$
where $I_m$ is the identity matrix of size $m$.
\end{lemma}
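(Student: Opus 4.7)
The plan is to verify the claim by a direct entry-wise calculation of the Gram matrix $V^{(n)\top} V^{(n)}$, exploiting the one distinguishing combinatorial feature of $V^{(n)}$ used throughout the paper: its rows enumerate exactly all $n = 2^m$ sign vectors of $\mathcal{U}_m$. Write
\[
\bigl(V^{(n)\top} V^{(n)}\bigr)_{ij} = \sum_{k=1}^{n} V^{(n)}_{ki}\, V^{(n)}_{kj},
\]
and treat the diagonal and off-diagonal cases separately.

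For the diagonal entries $i = j$, each summand equals $(\pm 1)^2 = 1$, and there are $n$ of them, giving $n$.

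For the off-diagonal entries $i \neq j$, the idea is a pairing argument already used implicitly in Lemma \ref{Lemma2}: for every row $v^{(k)}$ of $V^{(n)}$, the vector obtained by flipping the sign of its $i$-th coordinate alone also appears as some other row of $V^{(n)}$, by completeness of the enumeration. Pairing each row with its $i$-flipped partner, the product $V^{(n)}_{ki} V^{(n)}_{kj}$ changes sign (the $i$-th factor flips, the $j$-th does not), so the $n$ contributions cancel in pairs. Equivalently, partition the $n$ rows into $2^{m-2}$ groups according to the values of the coordinates distinct from $i$ and $j$; within each group the pair $(V^{(n)}_{ki}, V^{(n)}_{kj})$ takes each of the four values $(\pm 1, \pm 1)$ exactly once, and $(+1)(+1)+(+1)(-1)+(-1)(+1)+(-1)(-1) = 0$.

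Combining the two cases yields $V^{(n)\top} V^{(n)} = n I_m$, and the orthogonality of the columns follows as an immediate corollary. There is no substantial obstacle: the only ingredient is the combinatorial completeness of the rows of $V^{(n)}$, which is built into its very definition.
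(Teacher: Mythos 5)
Your proof is correct and follows essentially the same route as the paper: the diagonal entries are trivially $n$, and the off-diagonal entries vanish by counting that each of the four sign pairs $(\pm 1,\pm 1)$ occurs exactly $2^{m-2}$ times among the rows. Your additional pairing (sign-flip) formulation is just a repackaging of that same counting argument.
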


\begin{proof} Fix two distinct integers $i$ and $j$ less than or equal to $m$.
For $k=1,\ldots,n$, the couples $(V_{ki}^{(n)},V_{kj}^{(n)})$ may be $(1,1),(-1,-1),(-1,1)$ or $(1,-1)$.
The number of couples of the first type $(1,1)$ are $2^{m-2}$ since,
by construction, they are complemented with any combination of $m-2$ signs $\pm 1$.
The same holds exactly for the other three couples, hence
$$
(V^{(n)\top} V^{(n)})_{i,j}=\Sigma_{k=1}^n \ V_{ki}^{(n)}V_{kj}^{(n)}  = 2^{m-2} + 2^{m-2} -2^{m-2} -2^{m-2} =0
$$
If $i=j$, one instead has $(V^{(n)\top} V^{(n)})_{i,i}= \|c_i\|_2^2=n$ (with $c_i$ a column of $V^{(n)}$) and this completes the proof. 
\end{proof}

For future developments, it is now important to provide insights regarding
$$
{\mathcal M}_n:=max_{u \in {\mathcal U}_n} \ \|M^{(n)}u\|_1
$$ 
and 
$$
{\mathcal M}_n^*:=n \cdot max_{a \in {\mathbb R}^m, \ \|a\|_2 \le 1} \ \|V^{(n)}a\|_1.
$$
\medskip
\begin{lemma} \label{Lemma7}  ${\mathcal M}_n^* \ge {\mathcal M}_n$ holds true.
\end{lemma}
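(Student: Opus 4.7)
The plan is to reduce the computation of $\|M^{(n)} u\|_1$ for $u \in \mathcal{U}_n$ to a computation of $\|V^{(n)} a\|_1$ over a suitable Euclidean ball, by exploiting the factorization $M^{(n)} = V^{(n)} V^{(n)\top}$ together with Lemma~\ref{Lemma6}.

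First I would set $a := V^{(n)\top} u$, so that $a \in \mathbb{R}^m$ and $M^{(n)} u = V^{(n)} a$; in particular $\|M^{(n)} u\|_1 = \|V^{(n)} a\|_1$. The natural next step is to control $\|a\|_2$. Using Lemma~\ref{Lemma6} one obtains $V^{(n)\top} V^{(n)} = n I_m$, which means that the nonzero eigenvalues of the $n \times n$ matrix $V^{(n)} V^{(n)\top}$ coincide with those of $n I_m$ and are therefore all equal to $n$. Consequently the spectral (operator $2$-norm) bound gives
\[
\|a\|_2^2 \;=\; u^\top V^{(n)} V^{(n)\top} u \;\le\; n \|u\|_2^2 \;=\; n \cdot n \;=\; n^2,
\]
since any $u \in \mathcal{U}_n$ has $\|u\|_2^2 = n$. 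Hence $\|a\|_2 \le n$.

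Setting $b := a/n$, we have $\|b\|_2 \le 1$, and by linearity
\[
\|M^{(n)} u\|_1 \;=\; \|V^{(n)} a\|_1 \;=\; n\,\|V^{(n)} b\|_1 \;\le\; n \cdot \max_{\|b'\|_2 \le 1} \|V^{(n)} b'\|_1 \;=\; \mathcal{M}_n^*.
\]
Taking the maximum of the left-hand side over $u \in \mathcal{U}_n$ yields $\mathcal{M}_n \le \mathcal{M}_n^*$, which is the claim.

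There is no real obstacle here: the only nontrivial ingredient is noticing that Lemma~\ref{Lemma6} immediately pins down the operator $2$-norm of $V^{(n)\top}$, so that vectors of the form $V^{(n)\top} u$ with $u$ of $\pm 1$ entries automatically lie in the scaled Euclidean ball of radius $n$ that appears in the definition of $\mathcal{M}_n^*$. Everything else is a one-line substitution plus a rescaling.
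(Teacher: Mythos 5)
Your proof is correct and follows essentially the same route as the paper: both arguments rest on $M^{(n)}=V^{(n)}V^{(n)\top}$ and Lemma~\ref{Lemma6} to show that $M^{(n)}u$ equals $V^{(n)}$ applied to a vector lying in the Euclidean ball of radius $n$, and then compare with the maximum over that ball. The only cosmetic difference is that you bound $\|V^{(n)\top}u\|_2$ via the spectral norm of $V^{(n)}V^{(n)\top}$, whereas the paper writes $u=V^{(n)}a+w$ with $w$ orthogonal to the columns and invokes the Pythagorean identity — these are the same estimate in different clothing.
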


\begin{proof} We start by decomposing $u \in {\mathcal U}_n$ in terms of orthogonal components, i.e.
$$
u=a_1c_1+\dots+a_mc_m+w=V^{(n)}a+w, 
$$
where recall that the $c_i$ are the columns of $V^{(n)}$, 
$$
\ a:=\left[\begin{matrix} a_1 & \dots & a_m\end{matrix}\right]^T
$$
and
$$
w \perp c_i, \ i=1,\dots,m, \quad w^TV^{(n)}=0.
$$
Then, it follows that, for any $u \in {\mathcal U}_n$, one has
$$
n=\|u\|_2^2=\|a_1c_1\|_2^2+\dots+\|a_mc_m\|_2^2+\|w\|_2^2=n(a_1^2+\dots+a_m^2)+\|w\|_2^2.
$$
Thus, one has $\|a\|_2 \le 1$ and also 
\begin{eqnarray*} 
M^{(n)}u&=&M^{(n)}(V^{(n)}a+w)=V^{(n)}(V^{(n)\top}V^{(n)})a+V^{(n)}V^{(n)\top}w \\
&=&V^{(n)}(nI_m)a+V^{(n)}(w^TV^{(n)})^T=nV^{(n)}a+V^{(n)} \cdot 0 \\
&=& nV^{(n)}a.
\end{eqnarray*}
We obtain
$$
\|M^{(n)}u\|_1=n\|V^{(n)}a\|_1
$$
whose maximum value can be found by inspecting either the finite set ${\mathcal U}_n$ for evaluating ${\mathcal M}_n$, or the ipersphere defined by $\|a\|_2 \le 1$ 
(that contains the $2^n$ points corresponding to the various vectors $u$) for evaluating ${\mathcal M}_n^*$. 
The proof is then completed just noticing that $max_{x \in A} \ f(x) \ \le max_{x \in B} \ f(x)$ if $A \subset B$.
\end{proof}

\medskip
\begin{lemma} \label{Lemma8}  It holds that
$$
{\mathcal M}_n^*=n \cdot max_{\|a\|_2=1} \ \Sigma_{b \in {\mathcal U}_m} \ |a^Tb|
$$
\end{lemma}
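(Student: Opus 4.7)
The plan is to unpack both sides using the construction of $V^{(n)}$ and then invoke positive homogeneity to convert the inequality constraint $\|a\|_2 \le 1$ into the equality $\|a\|_2 = 1$.

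First I would observe that by definition the rows of $V^{(n)}$ are precisely the $n = 2^m$ vectors of ${\mathcal U}_m$: indeed $V^{(n)} = [v^{(1)} \ \dots \ v^{(n)}]^\top$ where $\{v^{(i)}\}_{i=1}^n$ is an enumeration of ${\mathcal U}_m$. Consequently, for any $a \in \mathbb{R}^m$, the $i$-th entry of $V^{(n)} a$ is the scalar product $v^{(i)\top} a$, and summing the absolute values over $i$ amounts to summing over all sign patterns $b \in {\mathcal U}_m$:
$$
\|V^{(n)} a\|_1 \;=\; \sum_{i=1}^{n} \bigl|v^{(i)\top} a\bigr| \;=\; \sum_{b \in {\mathcal U}_m} |a^\top b|.
$$
This rewrites $\mathcal{M}_n^*$ as
$$
\mathcal{M}_n^* \;=\; n \cdot \max_{\|a\|_2 \le 1} \ \sum_{b \in {\mathcal U}_m} |a^\top b|.
$$

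Next I would handle the passage from $\|a\|_2 \le 1$ to $\|a\|_2 = 1$. The map $g(a) := \sum_{b \in {\mathcal U}_m} |a^\top b|$ is continuous and positively homogeneous of degree one, since $g(\alpha a) = |\alpha| g(a)$ for every $\alpha \in \mathbb{R}$. Because ${\mathcal U}_m$ contains the standard sign patterns (in particular it spans $\mathbb{R}^m$), $g$ is not identically zero on the unit ball, so the maximum of $g$ over the compact set $\{\|a\|_2 \le 1\}$ is strictly positive and therefore cannot be attained at $a = 0$. For any other candidate $a$ with $\|a\|_2 < 1$, the rescaled vector $a/\|a\|_2$ lies on the unit sphere and yields a strictly larger value of $g$. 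Hence the maximum is attained on the boundary $\|a\|_2 = 1$, giving
$$
\max_{\|a\|_2 \le 1} \ \sum_{b \in {\mathcal U}_m} |a^\top b| \;=\; \max_{\|a\|_2 = 1} \ \sum_{b \in {\mathcal U}_m} |a^\top b|,
$$
which, multiplied by $n$, is exactly the claimed identity.

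There is no real obstacle here: the lemma is essentially a bookkeeping step that re-expresses $\mathcal{M}_n^*$ in a form amenable to asymptotic analysis (presumably via probabilistic/concentration arguments on random sign vectors in the sequel), and the only subtlety is checking that the sup on the open/closed ball is achieved on the sphere, which follows immediately from homogeneity.
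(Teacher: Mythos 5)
Your proof is correct and follows essentially the same route as the paper: identify the entries of $V^{(n)}a$ with the inner products $a^\top b$ over all sign vectors $b \in {\mathcal U}_m$, then use positive homogeneity to push the maximizer from the ball $\|a\|_2 \le 1$ to the sphere $\|a\|_2 = 1$. Your treatment is in fact marginally more careful than the paper's, since you explicitly rule out $a=0$ before invoking the strict scaling inequality.
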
 
\begin{proof}   From
$$
nV^{(n)}a=n(a_1c_1+\dots+a_mc_m)
$$
and by the properties of $V^{(n)}$'s rows, the entries of the vector $a_1c_1+\dots+a_mc_m$ 
are given by the coefficients $a_i, \ i=1,\dots,m$ multiplied by $\pm 1$ in all the possible $n=2^m$ combinations.
So, if $b:=\left[\begin{matrix}b_1 & \dots & b_m \end{matrix} \right]^T \in {\mathcal U}_m$, we easily have
\begin{eqnarray*} 
n\|V^{(n)}a\|_1&=&n\|a_1c_1+\dots+a_mc_m\|_1\\
&=&n\Sigma_{b \in {\mathcal U}_m} \ |a_1b_1+\dots+a_mb_m|\\
&=& n\Sigma_{b \in {\mathcal U}_m} \ |a^Tb|
\end{eqnarray*} 
where $\|a\|_2 \le 1$. If $\|a\|_2<1$, there exists some $h>1$ such that $\|ha\|_2 \le 1$ and 
$$
n\Sigma_{b \in {\mathcal U}_m} \ |ha^Tb|=hn\Sigma_{b \in {\mathcal U}_m} \ |a^Tb| > n\Sigma_{b \in {\mathcal U}_m} \ |a^Tb|.
$$ 
Then, we conclude that the maximum point must belong to the boundary $\|a\|_2=1$.
\end{proof}
\medskip
\begin{lemma} \label{Lemma9}  It holds that
$$
\Sigma_{b \in {\mathcal U}_m} \ (a_1b_1+\dots+a_mb_m)^2=n\|a\|_2^2
$$
\end{lemma}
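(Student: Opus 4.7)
The plan is to reduce the statement directly to Lemma \ref{Lemma6}. The rows of $V^{(n)}$ are by construction exactly the $n=2^m$ elements of $\mathcal{U}_m$, so
\[
\Sigma_{b \in \mathcal{U}_m} (a_1 b_1 + \dots + a_m b_m)^2 \;=\; \Sigma_{k=1}^{n} \left( \Sigma_{j=1}^{m} V_{kj}^{(n)} a_j \right)^{\!2} \;=\; \|V^{(n)} a\|_2^2,
\]
where the middle expression is just the sum of squared entries of the vector $V^{(n)} a$. Expanding this as a quadratic form gives $\|V^{(n)}a\|_2^2 = a^\top V^{(n)\top} V^{(n)} a$, and applying Lemma \ref{Lemma6} immediately yields $a^\top (n I_m) a = n \|a\|_2^2$, which is the claim.

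If one preferred a direct combinatorial proof avoiding Lemma \ref{Lemma6}, I would expand the square as
\[
(a^\top b)^2 \;=\; \Sigma_{i=1}^{m} \Sigma_{j=1}^{m} a_i a_j b_i b_j
\]
and exchange the order of summation, so that the target sum becomes $\Sigma_{i,j} a_i a_j \, S_{ij}$ with $S_{ij} := \Sigma_{b \in \mathcal{U}_m} b_i b_j$. For $i = j$ we have $b_i^2 = 1$, giving $S_{ii} = n$. For $i \neq j$, fixing the remaining $m-2$ coordinates and letting $(b_i, b_j)$ range over $\{-1,+1\}^2$, the product $b_i b_j$ takes the values $+1,+1,-1,-1$ once each, so the inner sum vanishes. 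Hence $S_{ij} = n \delta_{ij}$ and the total is $n \Sigma_i a_i^2 = n\|a\|_2^2$.

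I do not anticipate any real obstacle here; the identity is essentially Parseval's equality for the orthogonal system formed by the columns of $V^{(n)}$, repackaged combinatorially. The only point that deserves a moment's care is the bookkeeping that the rows of $V^{(n)}$ exhaust $\mathcal{U}_m$ without repetition, but this is part of the definition of $V^{(n)}$ in (\ref{Vn}).
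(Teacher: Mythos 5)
Your proposal is correct, and your primary argument takes a genuinely different route from the paper. The paper proves the identity by brute-force expansion: it writes $(a^\top b)^2$ as the diagonal part $\|a\|_2^2$ plus the cross terms $2\Sigma_{i\neq j}a_ia_jb_ib_j$, sums over $b\in\mathcal{U}_m$, and kills the cross terms by the observation that the four sign patterns of $(b_i,b_j)$ occur equally often, so $\Sigma_b b_ib_j=0$. Your first argument instead identifies $\Sigma_{b\in\mathcal{U}_m}(a^\top b)^2$ with $\|V^{(n)}a\|_2^2=a^\top V^{(n)\top}V^{(n)}a$ and invokes Lemma \ref{Lemma6} to get $nI_m$ at once -- a Parseval-style repackaging that the paper does not exploit here, even though the parity count underlying Lemma \ref{Lemma6} is the very same one the paper redoes inside its proof of Lemma \ref{Lemma9}. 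Your route is shorter and avoids duplicating that counting argument; the paper's direct expansion keeps the lemma self-contained at the cost of repetition. Your fallback combinatorial argument is, step for step, the paper's own proof, so nothing is missing on either path.
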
 
\begin{proof}   By developing the squares, also recalling that $b_i=\pm 1$ implies $b_i^2=1$, one has
$$
\begin{array}{lcl}
\Sigma_{b \in {\mathcal U}_m} \ (a_1b_1+\dots+a_mb_m)^2&=&\Sigma_{b \in {\mathcal U}_m} \ [a_1^2+\dots+a_m^2]+\Sigma_{b \in {\mathcal U}_m} \ (2\Sigma_{i \ne j} \ a_ia_jb_ib_j) \cr
&=&[a_1^2+\dots+a_m^2]2^m+2\Sigma_{i \ne j} a_ia_j (\Sigma_{b \in {\mathcal U}_m} \ b_i b_j) \cr
&=&n\|a\|_2^2+2\Sigma_{i \ne j} a_ia_j (\Sigma_{b \in {\mathcal U}_m} \ b_i b_j)
\end{array}
$$
The conclusion then follows by noticing that $\Sigma_{b \in {\mathcal U}_m} \ b_ib_j=0$. In fact, the $b_i$  assume the values $\pm 1$ in all possible combinations.
So, the pairs $(b_i,b_j)=(1,1),(1,-1),(-1,1),(-1,-1)$ appear the same number of times implying that the terms $b_ib_j=1$ and $b_ib_j=-1$ appear the same number of times, too. 
\end{proof}  
 \medskip
\begin{theorem}\label{theorem2} ${\mathcal M}_n={\mathcal M}_n^*=n^2$ holds true.
\end{theorem}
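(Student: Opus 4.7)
My plan is to sandwich both quantities between $n^2$ from above and below, using the two preceding identities as the main tools.

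First, I would bound $\mathcal{M}_n^*$ from above by $n^2$. By Lemma \ref{Lemma8}, it suffices to show that for every $a$ with $\|a\|_2=1$ we have $\sum_{b \in {\mathcal U}_m} |a^\top b| \le n$. This is a direct application of the Cauchy–Schwarz inequality in $\mathbb{R}^n$ to the vector $(|a^\top b|)_{b \in {\mathcal U}_m}$ against the all-ones vector: one obtains
$$
\sum_{b \in {\mathcal U}_m} |a^\top b| \;\le\; \sqrt{n}\,\sqrt{\sum_{b \in {\mathcal U}_m} (a^\top b)^2} \;=\; \sqrt{n}\,\sqrt{n\|a\|_2^2} \;=\; n,
$$
where the middle equality is exactly Lemma \ref{Lemma9}. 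Hence $\mathcal{M}_n^* \le n \cdot n = n^2$.

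Second, I would produce a matching lower bound for $\mathcal{M}_n$ by exhibiting a specific $u \in {\mathcal U}_n$. The natural candidate is any column $c_i$ of $V^{(n)}$: since the entries of $V^{(n)}$ are $\pm 1$, each column belongs to ${\mathcal U}_n$. From Lemma \ref{Lemma6} we have $V^{(n)\top} c_i = n\,e_i$, where $e_i$ is the $i$-th canonical basis vector of $\mathbb{R}^m$. Therefore
$$
M^{(n)} c_i \;=\; V^{(n)} V^{(n)\top} c_i \;=\; n\, V^{(n)} e_i \;=\; n\, c_i,
$$
so $\|M^{(n)} c_i\|_1 = n \cdot n = n^2$. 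This yields $\mathcal{M}_n \ge n^2$.

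Combining the two bounds with Lemma \ref{Lemma7}, which gives $\mathcal{M}_n \le \mathcal{M}_n^*$, produces the chain $n^2 \le \mathcal{M}_n \le \mathcal{M}_n^* \le n^2$, forcing equality throughout. I do not anticipate any real obstacle here: the only non-obvious step is recognizing that Lemma \ref{Lemma9} is precisely what is needed to apply Cauchy–Schwarz in the form that saturates at the desired value; the lower bound is essentially immediate once one thinks of feeding an eigenvector of $M^{(n)}$ (a column of $V^{(n)}$) into the $\|\cdot\|_1$ functional.
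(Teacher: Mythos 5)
Your proposal is correct and follows essentially the same route as the paper: the lower bound via a column of $V^{(n)}$ acting as an eigenvector ($M^{(n)}c_i = n c_i$, so $\|M^{(n)}c_i\|_1 = n^2$), and the upper bound via Cauchy--Schwarz against the all-ones vector combined with Lemma \ref{Lemma9}, then chaining through Lemmas \ref{Lemma7} and \ref{Lemma8}. No gaps.
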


\begin{proof}  By choosing $u$ equal to any column of $V^{(n)}$, one easily obtains $M^{(n)}u=nu$ and
this implies ${\mathcal M}_n \ge n\|u\|_1=n^2$. From the inequality $(\Sigma C_i^2)(\Sigma D_i^2) \ge (\Sigma C_iD_i)^2$, with $C_i=|a_1b_1+\dots+a_mb_m|, \ D_i=1$ and from Lemma \ref{Lemma9}, we obtain
$$
\begin{array}{lcl}
\left(\Sigma_{b \in {\mathcal U}_m} \ |a^Tb|\right)^2&=&\left[\Sigma_{b \in {\mathcal U}_m} \ |a_1b_1+\dots+a_mb_m| \ \cdot 1\right]^2 \cr
&\le& [\Sigma_{b \in {\mathcal U}_m} \ (a_1b_1+\dots+a_mb_m)^2] \ [\Sigma_{b \in {\mathcal U}_m} \ 1]=n\|a\|_2^22^m=n^2\|a\|_2^2
\end{array}
$$
and hence also
$$
\Sigma_{b \in {\mathcal U}_m} \ |a^Tb| \le n\|a\|_2.
$$ 
By exploiting Lemmas \ref{Lemma7} and \ref{Lemma8}, 
one obtains ${\mathcal M}_n \le {\mathcal M}_n^* \le n^2$ and, since we proved that ${\mathcal M}_n \ge n^2$, 
the conclusion is obtained. 
\end{proof}  

\section{Kernel absolute summability is only sufficient for RKHS stability}\label{Sec7}
We can now prove that $\lambda_{\infty}=0$ by exploiting the properties of the matrices $M^{(n)}$ and the previously obtained results.
This will allow to build the second counterexample that shows that bounded operators from $\ell_{\infty}$ into $\ell_1$ exist 
in absence of absolute summability even when the symmetric positive semidefinite constraints (\ref{Kconstraint}) are active. 
\medskip
\begin{lemma} \label{Lemma10} For large $p,m,n$, it holds that
$$
\frac{\|M^{(n)}\|_{\infty,1}}{\|M^{(n)}\|_1} \simeq \sqrt{\frac{\pi}{4p}}.
$$
Hence, it holds that
$$
\lambda_{\infty}:=\lim_{k \rightarrow +\infty} \ \lambda(k) = 0.
$$
\end{lemma}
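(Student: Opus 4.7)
The plan is to compute (at least asymptotically) the two norms of $M^{(n)}=V^{(n)}V^{(n)\top}$ separately, and then show that their ratio tends to zero, which forces $\lambda(n)$ to do the same.

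First I would handle $\|M^{(n)}\|_{\infty,1}$. The convexity argument of Lemma \ref{Lemma1} never used the shape of the matrix, only that a norm-$\ell_1$-of-linear-image is convex in the input, so it applies verbatim to the square matrix $M^{(n)}$: we may restrict the supremum to $u\in{\mathcal U}_n$. Hence $\|M^{(n)}\|_{\infty,1}=\mathcal{M}_n$, and Theorem \ref{theorem2} already gives $\mathcal{M}_n=n^2$. So this side is closed immediately: $\|M^{(n)}\|_{\infty,1}=n^2$.

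Next I would evaluate $\|M^{(n)}\|_1=\sum_{i,j}|M^{(n)}_{ij}|$. The key observation is that the $(i,j)$-entry is $\langle v^{(i)},v^{(j)}\rangle = m-2h_{ij}$, where $h_{ij}\in\{0,1,\dots,m\}$ is the Hamming distance between rows $v^{(i)}$ and $v^{(j)}$ of $V^{(n)}$. For any fixed $i$, as $j$ varies over $\{1,\dots,n\}$ the row $v^{(j)}$ runs through all of ${\mathcal U}_m$, so the number of $j$'s with $h_{ij}=h$ equals $\binom{m}{h}$. Thus the row sum $\sum_j|M^{(n)}_{ij}|$ does not depend on $i$, and equals $\sum_{h=0}^{m}\binom{m}{h}|m-2h|$. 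Since $m=2p+1$ is odd, the symmetry $\binom{m}{h}=\binom{m}{m-h}$ collapses this to $2\sum_{h=0}^{p}\binom{m}{h}(m-2h)$, exactly the quantity whose asymptotics Lemma \ref{Lemma4} delivered. Summing over $i$ gives
$$
\|M^{(n)}\|_1 \;=\; 2n\sum_{h=0}^{p}\binom{m}{h}(m-2h)\;\simeq\; 2n^2\sqrt{\tfrac{p}{\pi}}\qquad (p\to\infty).
$$

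Combining the two pieces yields
$$
\frac{\|M^{(n)}\|_{\infty,1}}{\|M^{(n)}\|_1}\;\simeq\;\frac{n^{2}}{2n^{2}\sqrt{p/\pi}}\;=\;\sqrt{\tfrac{\pi}{4p}},
$$
which is the first claim. For the second, I observe that $M^{(n)}=V^{(n)}V^{(n)\top}\in{\mathcal C}_n$, so the normalized matrix $M^{(n)}/\|M^{(n)}\|_1$ is admissible in the definition of $\lambda(n)$, giving $\lambda(n)\le\|M^{(n)}\|_{\infty,1}/\|M^{(n)}\|_1\simeq\sqrt{\pi/(4p)}$. Letting $p\to\infty$ (so $n=2^{2p+1}\to\infty$) and using the monotonicity of $\lambda$ established in Lemma \ref{Lemma5} concludes that $\lambda_\infty=0$.

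There is no genuine obstacle here: the proof is essentially bookkeeping once one recognizes that the entries of $M^{(n)}$ encode Hamming distances, which reroutes the computation of $\|M^{(n)}\|_1$ back into the same combinatorial sum already asymptotically evaluated in Lemma \ref{Lemma4}. The only mild care required is to note that Lemma \ref{Lemma1} extends to square matrices (by the same convexity argument) and that Theorem \ref{theorem2} supplies the exact value of $\mathcal{M}_n$, so no further optimization is needed on the operator-norm side.
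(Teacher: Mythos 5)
Your proof is correct and follows essentially the same route as the paper: $\|M^{(n)}\|_{\infty,1}=n^2$ via Lemma \ref{Lemma1} (extended to square matrices) and Theorem \ref{theorem2}, $\|M^{(n)}\|_1\simeq 2n^2\sqrt{p/\pi}$ via the binomial sum of Lemma \ref{Lemma4}, and then monotonicity of $\lambda(k)$ from Lemma \ref{Lemma5}. The only cosmetic difference is that you compute $\|M^{(n)}\|_1$ by counting Hamming distances directly, whereas the paper writes the column sums as $\|M^{(n)}e_i\|_1=\|V^{(n)}v^{(i)}\|_1$ and invokes Lemmas \ref{Lemma2}--\ref{Lemma3}; both reduce to the identical combinatorial quantity.
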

\begin{proof}  Let $e_i, \ i=1,2,\dots,n$ be the canonical basis in ${\mathbb R}^n$.
 Exploiting Lemma \ref{Lemma2}, one has that the $i-$th column of $M^{(n)}$ satisfies 
 $$
\|M^{(n)}e_i\|_1=\|V^{(n)}V^{(n)\top}e_i\|_1=\|V^{(n)}v^{(i)}\|_1 = \|V^{(n)}\|_{\infty,1}
$$
where the last equality exploits both the fact 
that $v^{(i)} \in {\mathcal U}_m$ for any $i$ and Lemma \ref{Lemma3}.
Hence, 
\begin{eqnarray*}
\|M^{(n)}\|_1&=&\sum_{h=1}^n \ \|M^{(n)}e_i\|_1=\sum_{h=1}^n \ \|V^{(n)}v^{(i)}\|_1 \\
&=&n\|V^{(n)}v^{(1)}\|_1=n\|V^{(n)}\|_{\infty,1}
\end{eqnarray*}
and from ({\ref A}), for  $p,m,n$ large enough it holds that
$$
\|M^{(n)}\|_1 \simeq 2n^2\sqrt{\frac{p}{\pi}}.
$$
In addition, from Theorem \ref{theorem2} and Lemma \ref{Lemma1} (now applied to $M^{(n)}$) one has
$$
\|M^{(n)}\|_{\infty,1}=n^2
$$
so that
$$
\frac{\|M^{(n)}\|_{\infty,1}}{\|M^{(n)}\|_1} \simeq \sqrt{\frac{\pi}{4p}}.
$$
In view of the last result, now it is easy to see that $\lambda_{\infty}=0$. In fact, 
we have just considered special matrices $M^{(n)} \in {\mathcal C}_n$, so that for $p$ large enough
\begin{eqnarray}\label{TheBound}
\lambda(n) &=& \lambda(2^{2p+1}) \le \frac{\|M^{(2^{2p+1})}\|_{\infty,1}}{\|M^{(2^{2p+1})}\|_1} \simeq \sqrt{\frac{\pi}{4p}} =  \frac{\sqrt{\pi}}{\sqrt{2\log_2(n(p))-2}}.
\end{eqnarray}
The upper bound established by (\ref{TheBound}) holds for the special values $n=2^{2p+1}$.
 But, 
since $\lambda(k)$ is  a monotone non-increasing sequence, this indeed implies $\lambda_{\infty}=0$
(as also graphycally depicted in Fig. \ref{Fig1}).
\end{proof}

\medskip
The result $\lambda_{\infty}=0$ just obtained  
paves the way for the most important result of the paper.
It is achieved through the following counterexample that shows that 
kernel absolute summability is only sufficient for RKHS stability.


\begin{figure}
  \begin{center}
   \begin{tabular}{cc}
\hspace{-.2in}
 { \includegraphics[scale=0.35]{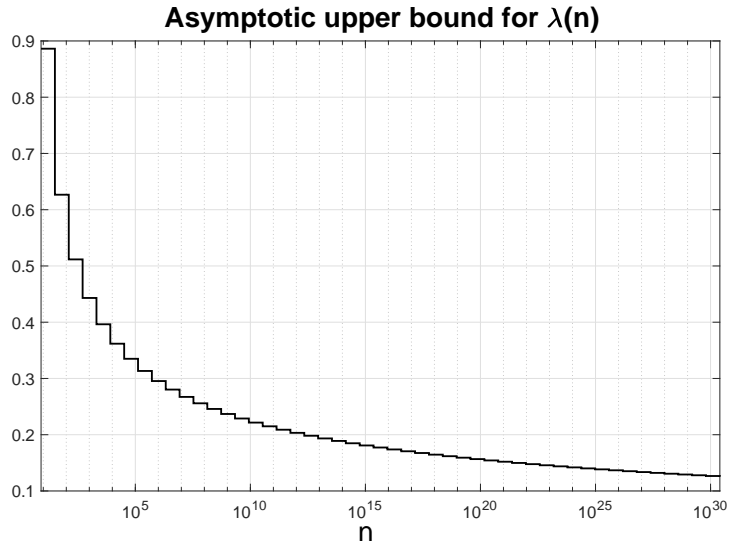}}
    \end{tabular}
    \caption{Let $n(p)=2^{2p+1}$ with  
    $p=1,2,\ldots$. The displayed curve is piecewise constant:
    over the intervals $n(p) \leq n < n(p+1)$ it is  
    equal to $\frac{\sqrt{\pi}}{\sqrt{2\log_2(n(p))-2}}$. Then, Lemma \ref{Lemma10}, as synthesized also by (\ref{TheBound}), ensures that
    this curve becomes asymptotically an upper bound for $\lambda(n)$. So, $\lambda(n)$ converges to zero
    with a rate not slower than the inverse of $\sqrt{\log_2(n)}$.}
    \label{Fig1}
     \end{center}
\end{figure}

\medskip

\begin{counterexample} A linear symmetric positive semidefinite operator ${\mathcal S}: \ \ell_{\infty} \ \rightarrow \ \ell_1$ can be bounded despite the lack of absolute summability.
\end{counterexample}

\begin{proof} We just need to exploit the result $\lambda_{\infty}=0$ obtained above. In particular, let us normalize the matrices $M^{(n)}$ as follows: we find a sequence $p(h), \ h=1,2,\dots$ such that $\lambda(2^{2p(h)+1}) \le \frac{1}{h}$\footnote{A simple choice is $p(h)=h^2$. Even if it guarantees that the inequality about $\lambda(2^{2p(h)+1})$ is satisfied only for  large $h$, this is actually all is needed since the convergence of a series only depends on the asymptotic behavior of its terms. An even simpler
choice could be $p(h)=h$: recalling the arguments in the proof of Theorem \ref{lambda} 
this would imply boundedness of the operator ${\mathcal S}$ as a consequence of the convergence of the series $\sum_{p=1}^{+\infty} \ \frac{1}{p\sqrt{p}}$.}, and then define
$$
S_h:=\frac{1}{h}\frac{M^{(2^{2p(h)+1})}}{\|M^{(2^{2p(h)+1})}\|_1}, \ \ h=1,2,\dots
$$
that implies both $\|S_h\|_1=\frac{1}{h}$ and $\|S_h\|_{\infty,1} \le \frac{1}{h^2}$. The infinite matrix $S=\diag(S_1,S_2,\dots)$ leads to an operator ${\mathcal S}:\ell_{\infty} \rightarrow \ell_1$, associated with $S$, which is bounded but it is not absolutely summable. Moreover, it is easily seen that the equality $\|{\mathcal S}u\|_1=\|{\mathcal S}\|=\|{\mathcal S}\| \cdot \|u\|_{\infty}$  
holds for any $u \in {\mathcal U}_{\infty}$. 
\end{proof}

\section{Conclusions}
Many authors pointed out that kernel absolute summability is a sufficient condition for RKHSs stability, without elaborating on its possible necessity. 
 None of the two possibilities was supported as the most reasonable one:
no clues were available, and the secret hope for a (surely desired) equivalence was postponed to further investigations.
Now we can claim (unfortunately, in some sense) that the class of stable RKHSs is wider than that of absolutely summable kernels. As we have described, the idea behind the counterexample construction is somewhat involved. This shows that such (no longer) open problem 
was an hard task to deal with, requiring understanding of the real nature of two different norms connected with operators mapping $\ell_{\infty}$ into $\ell_1$.

\bibliographystyle{abbrv}
\bibliography{references_sasha,biblio}

\end{document}